\DeclareMathOperator*{\argmax}{arg\,max}
\algnewcommand{\Initialize}[1]{%
  \State \textbf{Initialize:}
  \Statex \hspace*{\algorithmicindent}\parbox[t]{0.9\linewidth}{\raggedright #1}
}
\begin{document}
\title{An Optimal Online Method of Selecting\\ Source Policies for Reinforcement Learning}
\author{\#1719
}
\maketitle
\begin{abstract}
\begin{quote}
Transfer learning significantly accelerates the reinforcement learning process by exploiting relevant knowledge from previous experiences.
The problem of optimally selecting source policies during the learning process
is of great importance yet challenging.
There has been little theoretical analysis of this problem.
In this paper, we
develop an optimal online method to select source policies for reinforcement learning.
This method formulates online source policy selection as a multi-armed bandit problem and augments Q-learning with policy reuse.
We provide theoretical guarantees of the optimal selection process and  convergence to the optimal policy.
In addition, we conduct experiments on a grid-based robot navigation domain to demonstrate its efficiency and robustness by comparing to the state-of-the-art transfer learning method.
\end{quote}
\end{abstract}

\section{Introduction}
Reinforcement learning (RL) \cite{sutton1998reinforcement} is a widely-used framework to learn an optimal control or decision-making policy.
However, RL has a high sample complexity, since a RL agent gains data via repeated interactions with its environment.
Transferring past knowledge to a target task can greatly accelerate reinforcement learning. The first step of transfer in RL is to
select useful knowledge during the reuse process.
If an irrelevant source task is chosen, learning performance could be worse than learning from scratch, which is called negative transfer \cite{pan2010survey}.
This problem is challenging, because in practical situation, the environment is mostly unknown and an agent has no prior knowledge which source task is useful. So the agent has to do online source task selection.

Transfer learning has been recognized as an important direction in RL for a long time \cite{taylor2009transfer}.
Some works leverage source task knowledge without automatically identifying related source tasks \cite{parisotto2015actor,barreto2016successor,gupta2017learning}.
However, these methods may suffer from negative transfer.
Others require humans to define relationships between tasks and relevant source tasks \cite{torrey2005using,taylor2007transfer,ammar2011reinforcement,ammar2015unsupervised}.
Under a more general circumstance, an agent has to do source task selection by itself.
But selecting an appropriate source task usually demands quite a little extra knowledge concerning the domain.
For example, \cite{perkins1999using,lazaric2008transfer,nguyen2012transferring} need some prior experience in the target task. In \cite{ammar2014automated,song2016measuring}, a well-estimated or known model is assumed, which is not always available in practice.
Policy Reuse Q-learning (PRQL) \cite{fernandez2013learning}
requires no prior knowledge of target task or MDP environment,
but it may converge to a suboptimal policy.
To address above limitations, we propose an optimal method to select and reuse source policies online automatically without extra prior knowledge.

Our contributions in this paper can be claimed as follows.
First, we formulate source policy selection problem as a Multi-armed Bandit (MAB) problem where different source policies are regarded as bandits and enable optimal online source policy selection.
Second, we augment Q-learning with policy reuse, while maintaining the same theoretical guarantee of convergence as traditional Q-learning.
Finally, our empirical experiments conducted on a grid-based robot navigation domain verify that our approach (i)
 accomplishes the optimal source policy selection process;
(ii) transfers useful knowledge to a target task and significantly speeds up learning process;
and (iii) achieves virtually empirical performance to traditional Q-learning in equivalent situations where no source knowledge is useful.

In the remainder of this paper, we start by reviewing related work. Then, background knowledge on RL and problem formulation is described. After that, we present our approach and theoretical results followed by empirical results comparing our approach with the state-of-the-art algorithm. Finally, we conclude and outline directions for future work.
\section{Related Work}
As transfer in RL has received much attention recently, we now discuss in greater detail the relationship between our algorithm and others.
\cite{talvitie2007experts} treated previously learned policies as experts and mediated these experts intelligently. Since the mixing time of experts is known in episodic domains, this method is not as effective as standard algorithms.
In contrast, our approach works fine for episodic tasks.
PRQL \cite{fernandez2013learning}
selected source tasks from a library probabilistically using a softmax method.
However, because it stops doing exploration soon after greedy policy's reward exceeds the reuse reward,
it does not guarantee convergence to the optimal policy.
\cite{sinapov2015learning} learned transferability between a source-target task pair using meta-data.
This method has a low efficiency, for it is expensive to generate a large set of data using every source-target pair.
On the contrary, our approach adapts an MAB method to select source policies and accomplishes the optimal selection process.
\cite{rosman2016bayesian} proposed a Bayesian method to do policy reuse in a policy library. But
it mainly solves the problem of short-lived sequential policy selection,
therefore this method does not learn a full policy.
\cite{rusu2016progressive} leveraged prior knowledge via lateral connections to previously learned features in neural networks. Although they have showed positive transfer even in orthogonal or adversarial tasks, there is no theoretical foundation of their algorithm.

Some related works focus on multi-task learning (MTL), which is very similar to transfer learning.
 MTL assumes that all MDPs are drawn from the same distribution and learning is parallel on several tasks \cite{ramakrishnan2017perturbation}.
 In contrast, we make no assumption regarding the distribution over MDPs and concentrate on transfer learning problem.
 In one previous MTL work, \cite{wilson2007multi} represented the distribution of MDPs with a hierarchical Bayesian model. The continuously updated distribution served as a prior for rapid learning in new environments. But as Wilson et al mentioned in their work, their algorithm is not computationally efficient.
In more recent MTL works,
\cite{brunskill2013sample} proposed a technique that involves two phases of learning to reduce sample complexity of RL.
\cite{fachantidis2015transfer} determined the most similar source tasks based on compliance, which can be interpreted as a sort of distance metric between tasks.

\section{Preliminaries and Problem Formulation}
This section briefly reviews RL background and describes problem formulation. RL is a dominant framework to solve control and decision-making problems via mapping situations to actions. The learning environment of RL is an MDP defined as a tuple of $<S,A,T,R,b_0,\gamma>$, where $S$ denotes a discrete state space. At a time step $t$, an agent in a specific state $s_t$ performs an action $a_t$ in a discrete action space $A$.
Based on transition function $T(s_t,a_t,s_{t+1})$, the agent switches to next state $s_{t+1}$ and gets a reward $r_t$ according to reward function $R(s_t,a_t,s_{t+1})$.
 An agent begins to interact with its environment from a start state sampled from an initial belief $b_0$ and
 keeps taking actions until it reaches a final state or an absorbing state. A policy $\pi$ directs the agent which action to take, given a particular state. The agent's goal is to learn an optimal policy that maximizes the expected value of cumulative reward after training.
$\gamma$ is a discount factor to reduce the impact of future rewards on learning policy.
Q-learning is a model-free RL method, which is able to find an optimal policy for any finite MDP.
 A Q-learning agent learns the expected utility for each action in every given state $Q(s,a)$ by doing value iteration update in each step as below:
\begin{align*}
Q(s_t,a_t)\leftarrow(1-\alpha)Q(s_t,a_t)+\alpha(r_t+\gamma\max\limits_{a}Q(s_{t+1},a))
\end{align*}
where $\alpha$ is a learning rate.

Given a source policy library $L=\{\pi_1,...,\pi_n\}$, where $\forall \pi_j\in L$ denotes the optimal policy for source task $\Omega_j$ in one domain, our goal is to reuse source policies in library $L$ optimally and solve a target task $\Omega$. We assume that tasks are episodic with maximum number of steps $H$ in each episode and take average reward of $k$ episodes $W(k)$ as a metric of a learning algorithm:
\begin{align*}
W(k)=\frac{1}{k}\sum\nolimits_{i=0}^k\sum\nolimits_{h=0}^H\gamma^hr_{i,h}
\end{align*}
where $r_{i,h}$ is the reward of time step $h$ in episode $i$.
The convergence speed and value of $W(k)$ indicate the learning performance.
Our approach applies transfer learning to Q-learning,
so it is an off-policy learning method.
Since the exploration strategy will affect average reward during learning greatly, we do evaluation following a fully greedy strategy after each learning episode and obtain a learning curve of average reward.
\section{Approach}
The rewards of reusing source policies are stochastic in RL. Therefore, there is a dilemma between exploiting the policy which yields high current rewards and exploring the policy which may produce more future rewards.
We adapt an MAB method for this problem.
Our approach accomplishes online source policy selection via evaluating the utility of each source policy during learning a target task.
The exploration process is guided by the intelligently selected policy in Q-learning, which is an off-policy learner.
In the situation where no source knowledge is useful, $\epsilon\text{-}greedy$ strategy in our approach will play a major role to maintain the same learning performance as traditional Q-learning.
In this section, we firstly present an optimal online source policy selection method.
After that, we introduces how to reuse source knowledge through Q-learning.
Finally, we provide theoretical optimality analysis for our algorithm.
\begin{figure}
   \centering
    \includegraphics[width = .47\textwidth]{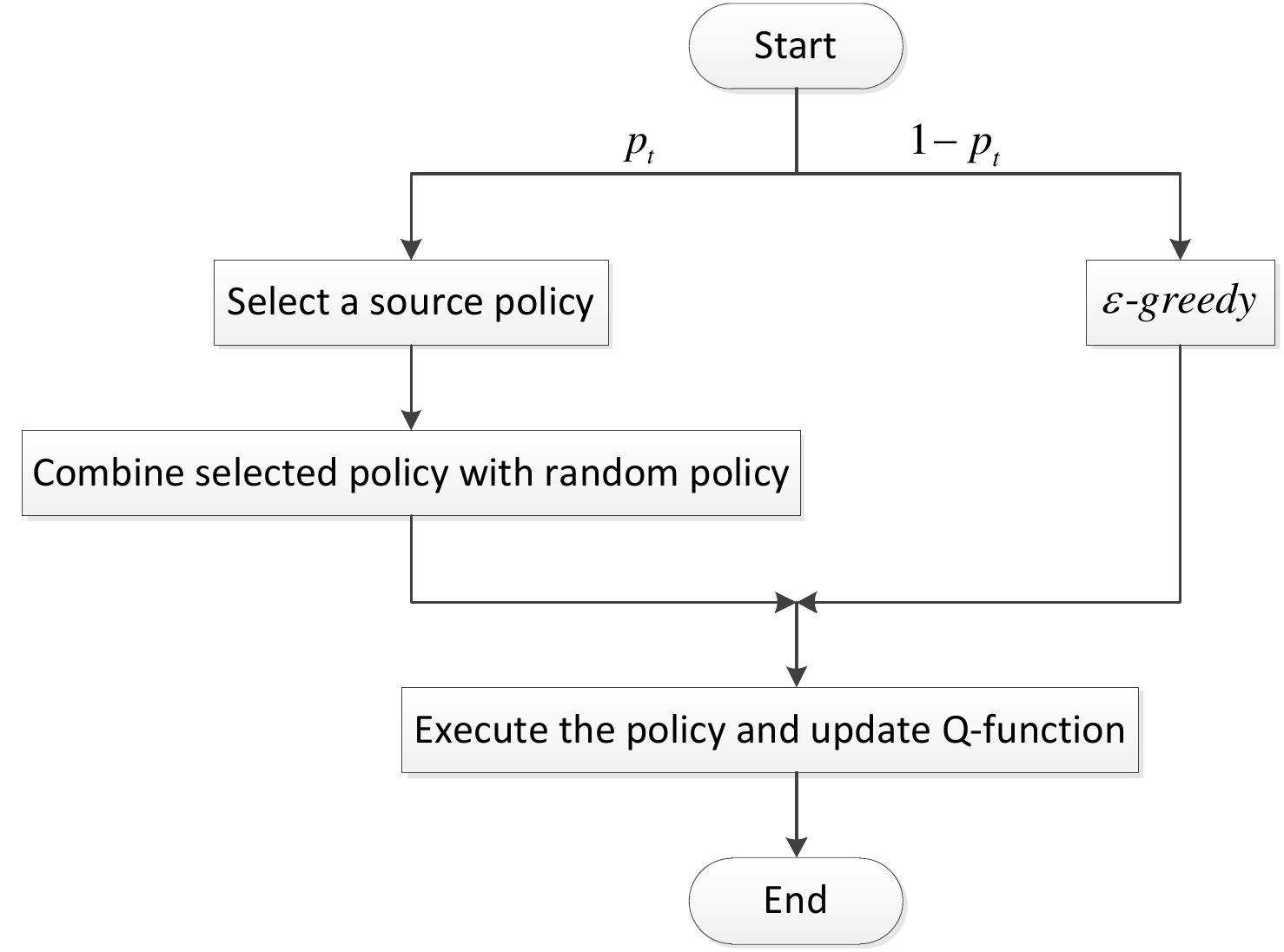}
   \setlength{\abovecaptionskip}{-0.1cm}
   \setlength{\belowcaptionskip}{-0.5cm} 
   \caption{Flow chart of our algorithm }
\end{figure}

Figure 1
provides the overview of our algorithm.
Firstly, we execute $\epsilon\text{-}greedy$ exploration strategy with a probability of $1-p_t$, and select a source policy from a library using an MAB method with a probability of $p_t$, where $p_t$ decreases over time.
That is to say, at the beginning of learning, we exploit source knowledge mostly. As learning goes, past knowledge becomes less useful, so we exploit $\epsilon\text{-}greedy$  strategy to go on learning.
 In order to exploit a past policy,
 our algorithm combines the past policy with the random policy. Once the policy of current episode is determined, our algorithm will execute the policy and update Q-function.
 Therefore, how to optimally select a source policy and to reuse the selected policy are key to transfer learning.
\subsection{Source Policy Selection}
An agent has no prior knowledge which source policy is useful for current target task before trying.
  It has to decide which source policy to reuse in the next episode based on previous rewards to obtain a larger cumulative reward.
  Different source policies can be regarded as bandits with stochastic rewards in MAB.
   Source policy selection and MAB are both the exploitation versus exploration tradeoff in essence.

The pseudo code of our source policy selection ($\pi$-selection) method is shown in Algorithm 1. In order to solve target task $\Omega$, this method chooses $\epsilon\text{-}greedy$ policy with a probability of $1-p_t$  and chooses a policy from source policy library $L=\{\pi_1,...,\pi_n\}$ using UCB1 with a probability of $p_t$ (Line:3-5).
When a past policy is chosen, we execute combining method of Algorithm 2 on the chosen past policy $\pi_k$ in episode $k$ (Line:6-7). We need to keep policy $\pi_j$ average gain $W_j(k)$ and number of selected times $T_j(k)$ in the previous $k$ episodes ($\forall j=1,...,n$) for UCB1 (Line:11-16). $\psi_h$  controls the reuse degree and Algorithm 1 is executed for $K$ episodes.

UCB is a simple and efficient algorithm that achieves optimal logarithmic regret of MAB \cite{lai1985asymptotically}.
MAB defines a collection of $n$ independent bandits with different expected reward $\{\mu_1,\mu_2,...,\mu_n\}$.
An agent sequentially selects bandits to make cumulative regret minimum. The regret is difference between expected reward of the selected bandit and maximum reward expectation $\mu^*$.
UCB1 of UCB family algorithms maintains number of steps $T_i(k)$ where machine $i$ has been selected in $k$ steps and empirical expected reward $\hat{\mu}_i$. Each machine in the collection is played once initially and UCB1 selects machine $j(k)$ as follows in every time step afterwards:
\begin{align*}
j(k)=\arg \max\limits_{i=1...n}(\hat{\mu}_i+\sqrt{\frac{2\text{ln}(k)}{T_i(k)}})
\end{align*}
\begin{algorithm}[!hbt]
\caption{$\pi$-selection ($\Omega,L,p_t,\psi_h,K,H$).}
\label{alg:Framwork}
\begin{algorithmic}[1]
\Initialize{
$Q(s,a)=0,\forall s\in S,a\in A $\\
$\forall j=1,...,n:$
$[W_j(0),Q]\leftarrow\pi\text{-reuse}(\pi_j,\Omega,Q,\psi_h,H)$\\
$T_j(0)=1$
}
\For{$k=1$ to $K$}
        \State {Choose a policy $\pi_k$:}
        \State {\quad With a probability of $1-p_t$, $\pi_k=\pi_{\epsilon\text{-}greedy}$}
        \State {\quad With a probability of $p_t$:}
        \begin{displaymath}
        \setlength\abovedisplayskip{1pt plus 3pt minus 7pt}
        \setlength{\belowdisplayskip}{0.1pt}
        \small
\hspace{-5mm} j=\argmax_{1\leq j\leq n}(W_j(k-1)+\sqrt{\frac{2\text{ln}(\sum\nolimits_{j=1}^nT_j(k-1))}{T_j(k-1)}})
\end{displaymath}
 \begin{displaymath}
        \setlength\abovedisplayskip{1pt plus 3pt minus 7pt}
        \setlength{\belowdisplayskip}{0.1pt}
        \small
\hspace{-5mm} \pi_k=\pi_j
\end{displaymath}
         \If{$\pi_k \in L$}
              \State $[R,Q]\leftarrow\pi$-reuse$(\pi_k,\Omega,Q,\psi_h,H)$
            \Else
              \State Execute $\epsilon\text{-}greedy$ policy
            \EndIf
        \If{$\pi_k \in L$}
              \State {$W_j(k)=\frac{W_j(k-1)T_j(k-1)+R}{T_j(k-1)+1}$,}
              \State {$T_j(k)=T_j(k-1)+1$;}
            \Else
              \State $W_j(k)=W_j(k-1), T_j(k)=T_j(k-1)$;
            \EndIf
      \EndFor
\label{code:fram:select} \\
\Return $Q(s,a)$
\end{algorithmic}
\end{algorithm}
\subsection{Source Policy Reuse}
To take full advantage of the selected policy, the random policy is indispensable for interacting with unexplored states. Without random actions, past policies will lead an agent to their original goals rather than the goal of target task. Exploiting the useful source policy can be regarded as directed exploration. Therefore, we combine source policy $\pi_{past}$ with random policy $\pi_r$ probabilistically in policy reuse strategy demonstrated by Algorithm 2.
At time step $h$, we take action based on $\pi_{past}$ with a probability $\psi_h$, and take a random action with a probability of $1-\psi_h$ (Line:3-4).
\begin{algorithm}[htbp]
\caption{$\pi\text{-reuse}$ ($\pi_{past},\Omega,Q,\psi_h,H$).}
\label{alg:Framwork}
\begin{algorithmic}[1]
        \State {Set initial state $s$ randomly.}
        \For{$h=1$ to $H$}
        \State {With a probability of $\psi_h, a=\pi_{past}(s)$}
        \State {With a probability of $1-\psi_h, a=\pi_r(s)$}
        \State {Receive next state $s'$ and reward $r_{h}$}
        \State {Update $Q(s,a)$:}
        \begin{equation*}
        \setlength\abovedisplayskip{1pt plus 3pt minus 7pt}
        \setlength{\belowdisplayskip}{0.1pt}
        \small
        \hspace{-5mm} Q(s,a)\leftarrow(1-\alpha)Q(s,a)+\alpha(r_{h}+\gamma\max\limits_{a'}Q(s',a'))
        \end{equation*}
        \State {Set $s \leftarrow s'$}
      \EndFor
      \State {$R=
      \sum\nolimits_{h=0}^H\gamma^hr_{h}$}
\label{code:fram:select} \\
\Return $R$ and $Q(s,a)$
\end{algorithmic}
\end{algorithm}

Algorithm 2 shares some similar ideas with PRQL.
They both take an action probabilistically in one episode. However,
we  mix no greedy action in reuse episodes to maintain a fixed expected value of reuse reward. In addition, each source policy is uncorrelated so stochastic assumption of MAB is satisfied.
 We choose $\epsilon\text{-}greedy$ strategy with a probability of $1-p_t$ outside reuse episodes instead. $\epsilon\text{-}greedy$ strategy is very crucial when there is no useful source policy in the library.
 As $p_t$ decreases over time, our algorithm will reuse source policy less and converge to $\epsilon\text{-}greedy$ strategy.
\subsection{Theoretical Analysis}
We present two theoretical results that provide the foundation of our approach below.
\newtheorem*{theorem}{Theorem 1}
\begin{theorem}
Given a source policy library $L$, if UCB1 selects  source policy according to the reward of each episode in Algorithm 2,
the expected regret is logarithmically bounded.
\end{theorem}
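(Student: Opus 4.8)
The plan is to show that Algorithm 2 turns each source policy into a bona fide stochastic bandit arm with a \emph{stationary} reward distribution, so that the problem matches exactly the setting of the classical UCB1 analysis, and then to invoke (and sketch) that analysis to obtain the logarithmic regret bound.

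First I would argue that the per-episode reuse reward is stationary and independent across episodes. The crucial observation is that action selection in Algorithm 2 depends only on the fixed source policy $\pi_{past}$, the fixed random policy $\pi_r$, and the predetermined mixing schedule $\psi_h$ --- it never consults the evolving $Q$-function. (This is exactly why the paper takes no greedy action inside a reuse episode.) Hence, conditioned on choosing arm $j$, the trajectory distribution, and therefore the distribution of $R=\sum_{h=0}^{H}\gamma^h r_h$, is identical in every episode and independent of the history of $Q$-updates. Writing $\mu_j=\mathbb{E}[R \mid \pi_k=\pi_j]$ and $\mu^*=\max_j \mu_j$, the selection process is a genuine $n$-armed stochastic bandit with fixed means $\mu_1,\dots,\mu_n$. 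I would also note that $R$ is bounded: with per-step rewards in some interval and $\gamma\in[0,1)$, we have $|R|\le r_{\max}\sum_{h=0}^H \gamma^h \le r_{\max}/(1-\gamma)$, so after an affine rescaling the rewards lie in $[0,1]$, as required by the standard confidence term.

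Next I would carry out the standard UCB1 bookkeeping on this bandit. The regret decomposes as $\sum_{j:\mu_j<\mu^*}\Delta_j\,\mathbb{E}[T_j(K)]$ with gaps $\Delta_j=\mu^*-\mu_j$, so it suffices to bound the expected number of pulls $\mathbb{E}[T_j(K)]$ of each suboptimal arm. A suboptimal arm $j$ is selected only when its index $W_j+\sqrt{2\ln k/T_j}$ exceeds that of the optimal arm; I would show this forces at least one of three low-probability events --- the empirical mean of the optimal arm lies below its confidence bound, the empirical mean of arm $j$ lies above its confidence bound, or arm $j$ has been pulled fewer than $\ell\approx(8\ln K)/\Delta_j^2$ times. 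Applying the Chernoff--Hoeffding inequality to the first two events (legitimate precisely because the rewards are i.i.d.\ and bounded) caps their summed probabilities by a constant (a convergent $\sum_k k^{-4}$ series yielding the $\pi^2/3$ term), while the third contributes the $O(\ln K/\Delta_j^2)$ leading term. Summing over suboptimal arms gives $\mathbb{E}[\text{regret}]=O(\ln K)$, the claimed logarithmic bound.

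The main obstacle is the stationarity argument of the first step rather than the bandit calculation of the second. Because the overall algorithm interleaves reuse episodes with $\epsilon$-greedy episodes and continually updates a shared $Q$-function, one must be careful that the reward fed to UCB1 is not contaminated by this nonstationary learning: the bound holds exactly because the reuse reward is defined by the policy-mixing mechanism of Algorithm 2, which is decoupled from $Q$. If, as in PRQL, greedy actions were mixed into reuse episodes, the arm means would drift with $Q$ and the i.i.d.\ hypothesis underlying the Hoeffding step would fail --- so this design choice is what makes the theorem go through.
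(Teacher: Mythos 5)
Your proposal is correct and follows essentially the same route as the paper's proof: the key step in both is observing that reuse episodes mix only the fixed source policy with the random policy (never consulting the evolving $Q$-function), so each arm's per-episode reward is an i.i.d.\ sample from a fixed bounded distribution, after which the standard UCB1 logarithmic regret bound applies. The paper simply cites the UCB1 result at that point, whereas you additionally unpack the Chernoff--Hoeffding bookkeeping and note the boundedness/rescaling of $R$ --- a more complete but not substantively different argument.
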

\begin{proof}
Because there is no greedy action in each reuse episode, all the source policies are not correlated.
In addition,
for each policy $\pi_j$, its reward in every episode is an independent sample from the same distribution with a fixed expectation.
So the stochastic assumption of MAB is satisfied. UCB can achieve the logarithmic regret bound asymptotically which is proved minimum by Lai and Robbins in their classical paper \cite{auer2002finite}, so it is an optimal allocation strategy when there is no preliminary knowledge about the reward distribution. As a result, this method of selecting source policies from a library is theoretically optimal.
\end{proof}
\newtheorem*{theorem1}{Theorem 2}
\begin{theorem1}
$Q(s,a)$ of Algorithm 1 will converge to the optimal Q-function $Q^*$ for target task $\Omega$ as traditional Q-learning.
\end{theorem1}
\begin{proof}
Since $p_t$ controlling the exploration rate decreases with time, Algorithm 1 will execute $\epsilon\text{-}greedy$ policy more and more frequently.
$\epsilon$ of $\epsilon\text{-}greedy$ policy is less than 1, so Algorithm 1 will keep doing exploration for infinite episodes.
Q-learning with a proper learning rate converges to the optimal Q-function
for any finite MDP \cite{melo2001convergence}.
The probability of executing random actions in Algorithm 1 will never equal to 0, so all state-action pairs will be visited infinitely often.
As a result, Algorithm 1 will converge to the optimal Q-function as traditional Q-learning.
\end{proof}
Both our learning method and selection method are optimal. Thus, our approach is an optimal online strategy to select source policies theoretically.
\section{Empirical Results}
To demonstrate that our algorithm is empirically sound and robust, we carry out experiments in a grid-based robot navigation domain with multiple rooms and compare the results with the state-of-the-art algorithm, PRQL \cite{fernandez2013learning}.
\subsection{Experimental Settings}
Our navigation domain has been used by PRQL paper.
Some recent works of transfer learning also conduct experiments in a similar grid-world domain \cite{lehnert2017advantages,laroche2017transfer}.
The map of our navigation domain is composed of $N\times M$ ($21\times 24$ in our case) states which denote free positions, goal area and wall. Each state is plotted as a $1\times 1$ cell. An agent's position is represented by a two-dimensional coordinates $(x,y)$ continuously. Afterwards, we take integer part of $(x,y)$ to determine discrete state of an agent. The agent in this problem can take four actions, respectively up, down, left and right. Arbitrary action makes agent's position move in the corresponding direction with a step size of 1. To make this problem more practical, we design a stochastic MDP by adding a uniformly distributed random variant within $(-0.2,0.2)$ to $x$ and $y$ respectively after an action. When an agent reaches a wall state, the wall will keep the agent in the state before taking actions. After an agent reaches the goal area, it will obtain a reward of 1 and then this episode ends.
Arriving at the other states except the goal state generates no reward.
An agent has no high-level view of the map and only observes its current state.
\vspace{-8pt}
\begin{figure}[htbp]
   \centering
   \includegraphics[width = .47\textwidth]{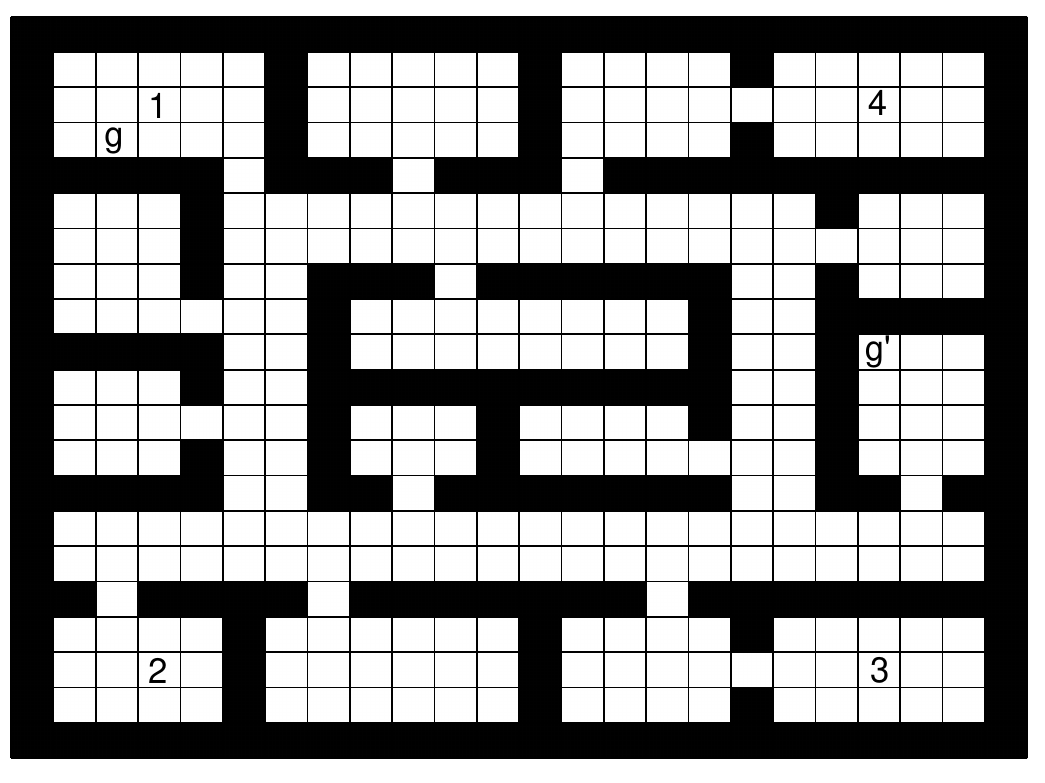}
   \setlength{\abovecaptionskip}{-0.1cm}
   \setlength{\belowcaptionskip}{-0.2cm} 
   \caption{ Target tasks and source task library in the map }
\end{figure}

In Figure 2, $g$ and $g'$ represent the goals of target task $\Omega$ and $\Omega'$.
Numbers in Figure 2 denote goals of a source task library
 $\{\Omega_1,\Omega_2,\Omega_3,\Omega_4\}$.
Source policies in the library are optimal to their respective tasks. Obviously, $\Omega_1$ is the most similar to $\Omega$, because their goals are in the same room, and the other tasks are dissimilar to $\Omega$.
This problem has a large number of states, and the initial belief $b_0$ is an uniform distribution. Therefore, learning from scratch is rather slow and transfer learning can significantly accelerate the learning process.

In this experiment,
$\alpha=0.05,\gamma=0.95$ for Q-function update.
$K$ is set as 4000, which is enough for our approach to learn a policy with high reward.
To avoid an agent entering a dead loop, $H$ is set as 100.
Because goals of different tasks are different,an agent takes actions according to the past policy with a larger probability at the beginning of an episode.
Afterwards, it takes more random actions.
So $\psi_h$ is set as $0.95^h$.
In addition, $p_t$ is set to $1-(k/k+1500)$, which decreases over time.
So our approach converges to $\epsilon\text{-}greedy$ at last. To make parameters simple,
 $\epsilon$ of $\epsilon\text{-}greedy$ policy is set as 0.1 (with probability 0.9 follows the greedy strategy, and with probability 0.1 acts randomly).
We conduct these experiments with PRQL and Q-learning to do comparison.
Q-learning utilizes $\epsilon\text{-}greedy$ with the same parameter as $\epsilon\text{-}greedy$ of our approach.
The parameters of PRQL are consistent with those in \cite{fernandez2013learning}.

UCB1-tuned tunes upper confidence bound according to variance of bandit rewards as follows:
\begin{align*}
j(k)=\arg \max\limits_{i=1...n}(\hat{\mu}_i+\sqrt{\frac{c\text{ln}(k)}{T_i(k)}})
\end{align*}
where
\begin{align*}
c=\text{min}(\frac{1}{4},\hat{\sigma}_i^2(k)+\sqrt{\frac{2\text{ln}(k)}{T_i(k)}})
\end{align*}
($\frac{1}{4}$ is upper bound on the variance of a Bernoulli random variable). UCB1-tuned outperforms UCB1 in a multitude of  experiments. Although it has not been proved theoretically optimal, another algorithm UCB-V which also considers the variance of bandit rewards has already been proved optimal in theory \cite{audibert2009exploration}. As the variance and expectation of reward in this experiment are much smaller than $\frac{1}{4}$, we set $c$ to 0.0049 for Algorithm 1. A lager $c$ will lead to a higher exploration rate, so it is more suitable to the circumstance with a larger variance.

In next section, we compare the empirical performance between our algorithm and PRQL especially.
\subsection{Experimental Results}
 In order to manifest that our approach achieves the optimal source policy selection, we firstly show the learning curve by doing evaluation after each episode and the frequency of selecting source policies.
Next, we compare the expected reward among PRQL, our approach and traditional Q-learning.
Then, we set reward functions of target tasks randomly and conduct the above experiment again to demonstrate robustness of our approach.
Afterwards, we conduct an experiment to indicate that our method is equally applicable to a circumstance where no similar task exists in source task library.
Finally, we present a scenario when PRQL does not converge to the greedy policy. However, in the same case, our approach still converges to $\epsilon\text{-}greedy$ strategy.
\vspace{-8pt}
\begin{figure}[htb]
   \centering
   \includegraphics[width =0.47\textwidth]{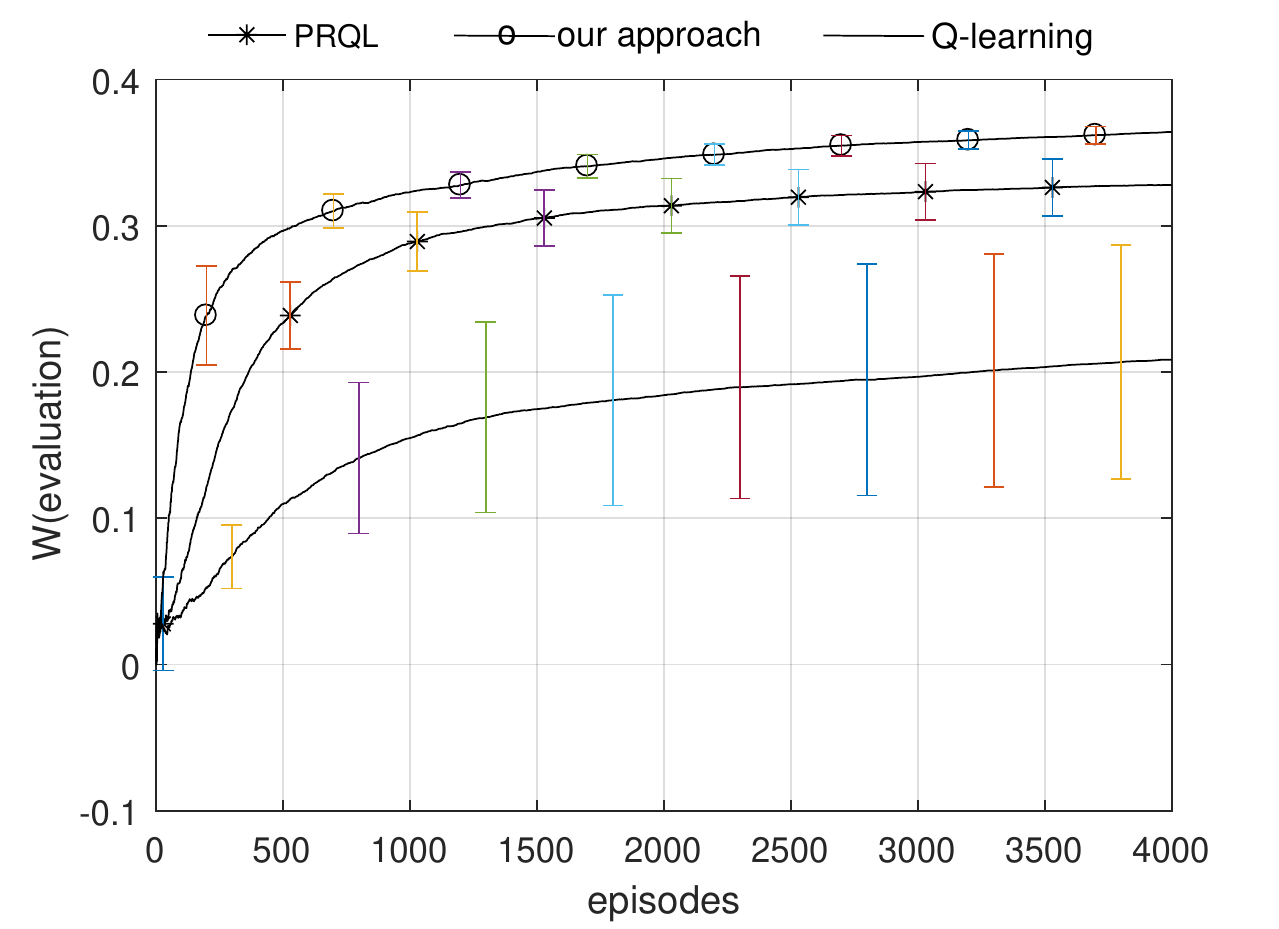}
   \setlength{\abovecaptionskip}{-0.3cm}
  \setlength{\belowcaptionskip}{-0.3cm}
   \caption{Performance comparison of PRQL, our algorithm and traditional Q-learning on target task $\Omega$}
\end{figure}

Figure 3 shows the learning curve of PRQL, our approach and traditional Q-learning when solving target task $\Omega$.
We compare the average reward generated by following a fully greedy policy after each episode, since these three learning methods are all off-policy.
Each learning process in Figure 3 has been executed 10 times. The average value is shown and error bars represent standard deviations.

In Figure 3, $x$ axis  represents the number of episodes and $y$ axis represents average evaluation reward.
From Figure 3, we can have three observations.
First, our algorithm uses less time to reach a threshold of average reward than PRQL. Average reward of our approach is greater than 0.3 in only 500 episodes and more than 0.35 in the end. Second, the asymptotic performance of our approach is
 better than PRQL. Although the gap is not overwhelmingly big, it is comparable to the convergence value of cumulative reward, since reward in this experiment is sparse and not exceedingly large. Third, reward of Q-learning increases much more slowly than our approach, so knowledge transfer in our approach is intensely efficient and no negative transfer occurs. In addition, standard deviations of our approach are the smallest among the three curves, which demonstrates that our approach has an extremely stable performance in the 10 executions.
 \vspace{-8pt}
 \begin{figure}[htbp]
   \centering
   \subfigure[our approach]{
   \centering
   \includegraphics[width =0.46\textwidth]{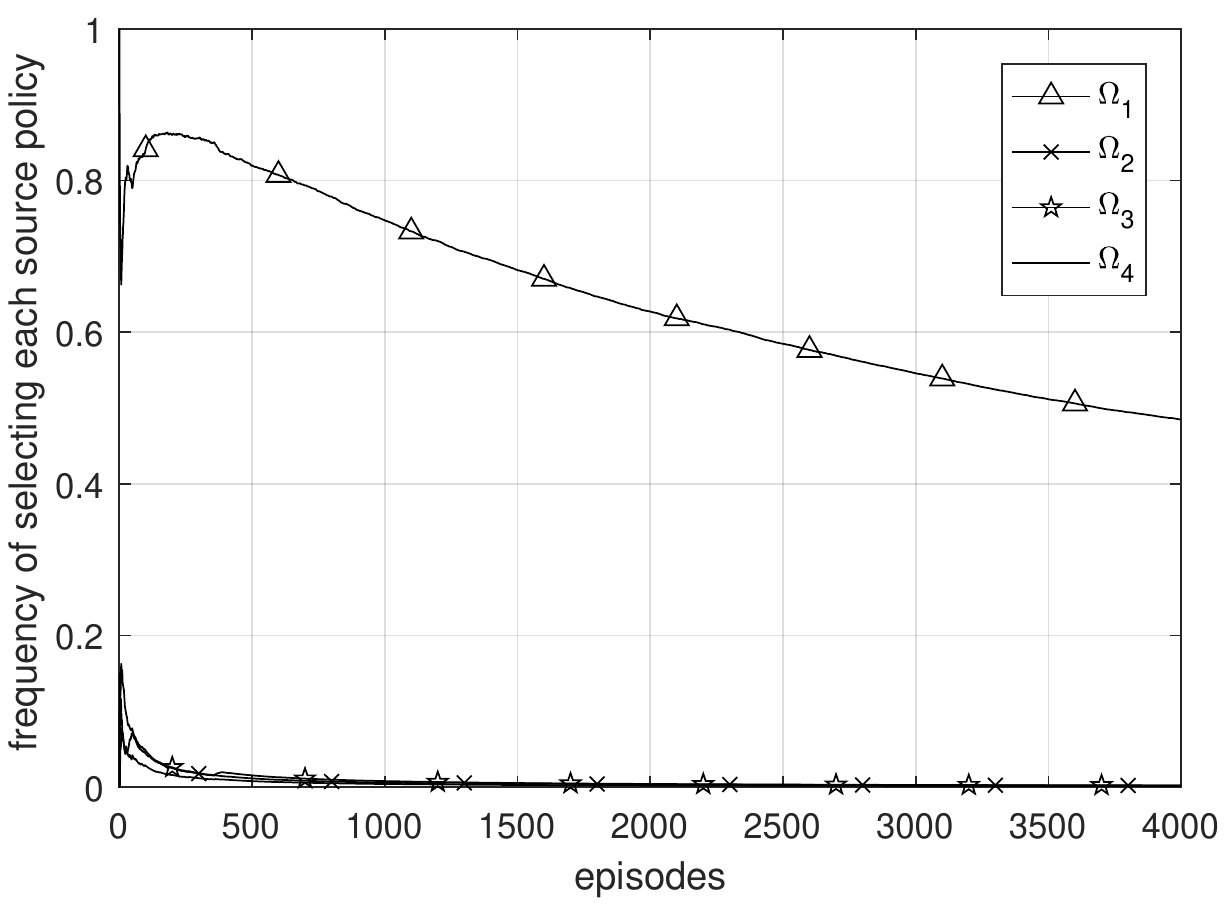}
   }

   \subfigure[PRQL]{
   \centering
   \includegraphics[width =0.46\textwidth]{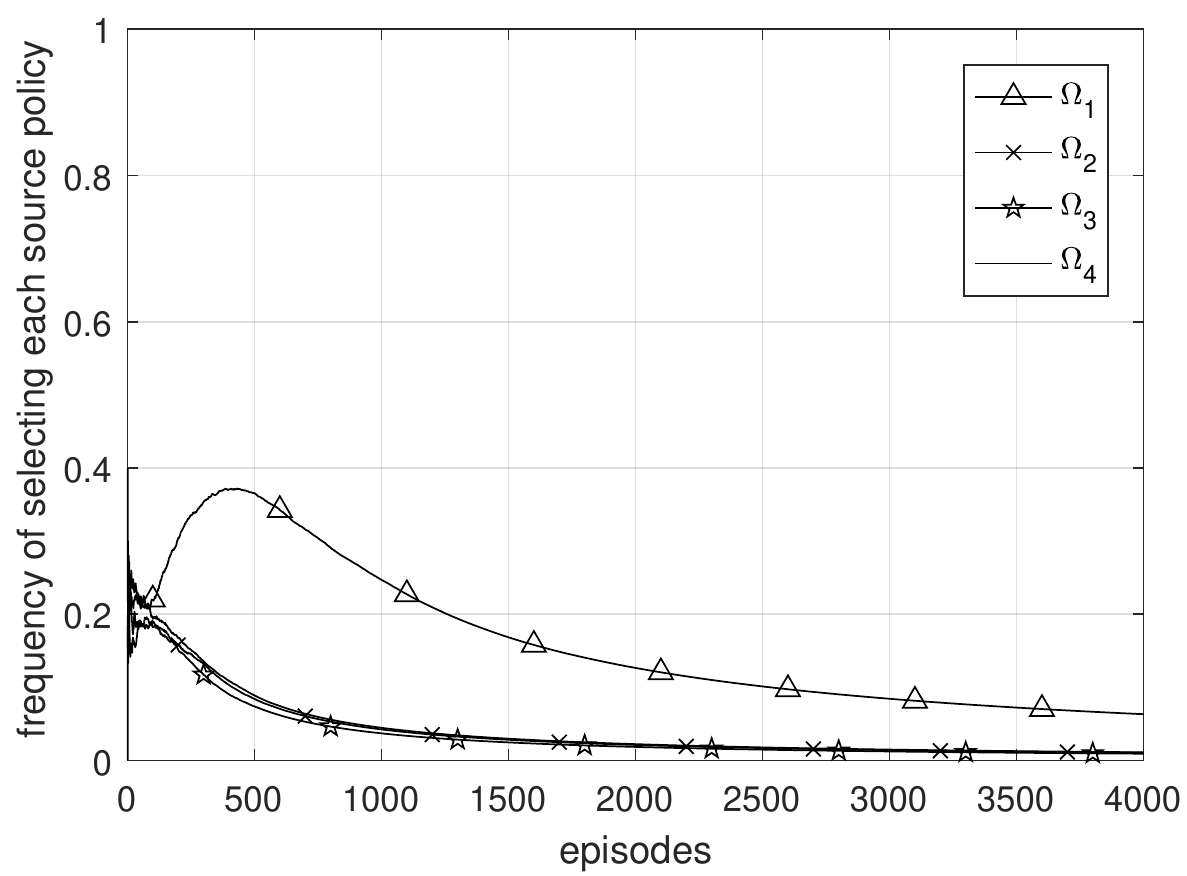}
   }
   \setlength{\abovecaptionskip}{-0.3cm}
   \setlength{\belowcaptionskip}{-0.3cm}
   \caption{Frequency of selecting each source policy}
\end{figure}

Since our approach selects source policies deterministically, we cannot compare the probability of selecting each source policy.
Therefore, we show the frequency of selecting source policies by our approach and PRQL in Figure 4.

We can see that our approach almost does not select irrelevant tasks any more after 500 episodes. In contrast, frequency of choosing dissimilar tasks by PRQL is still around 0.1 in 500 episodes. It costs a shorter time for our method to detect that source task $\Omega_1$ is proper to transfer from.
The curve of $\Omega_1$ drops slowly in our method because we keep using the past policy to explore the environment and our reuse method is different from PRQL. To satisfy the stochastic assumption of MAB, we split greedy actions from the reuse process.
As frequency is different from probability,
we select $\epsilon\text{-}greedy$ policy with a high probability when $k$ is large.

Since the initial belief $b_0$ in this experiment is an  uniform distribution, the expected reward can be denoted as $\frac{1}{S}\sum\nolimits_{s\in S}\max\nolimits_{a}Q(s,a)$.
Figure 5 shows the expected reward of our approach, PRQL and Q-learning.
\begin{figure}[htbp]
   \centering
   \includegraphics[width =0.47\textwidth]{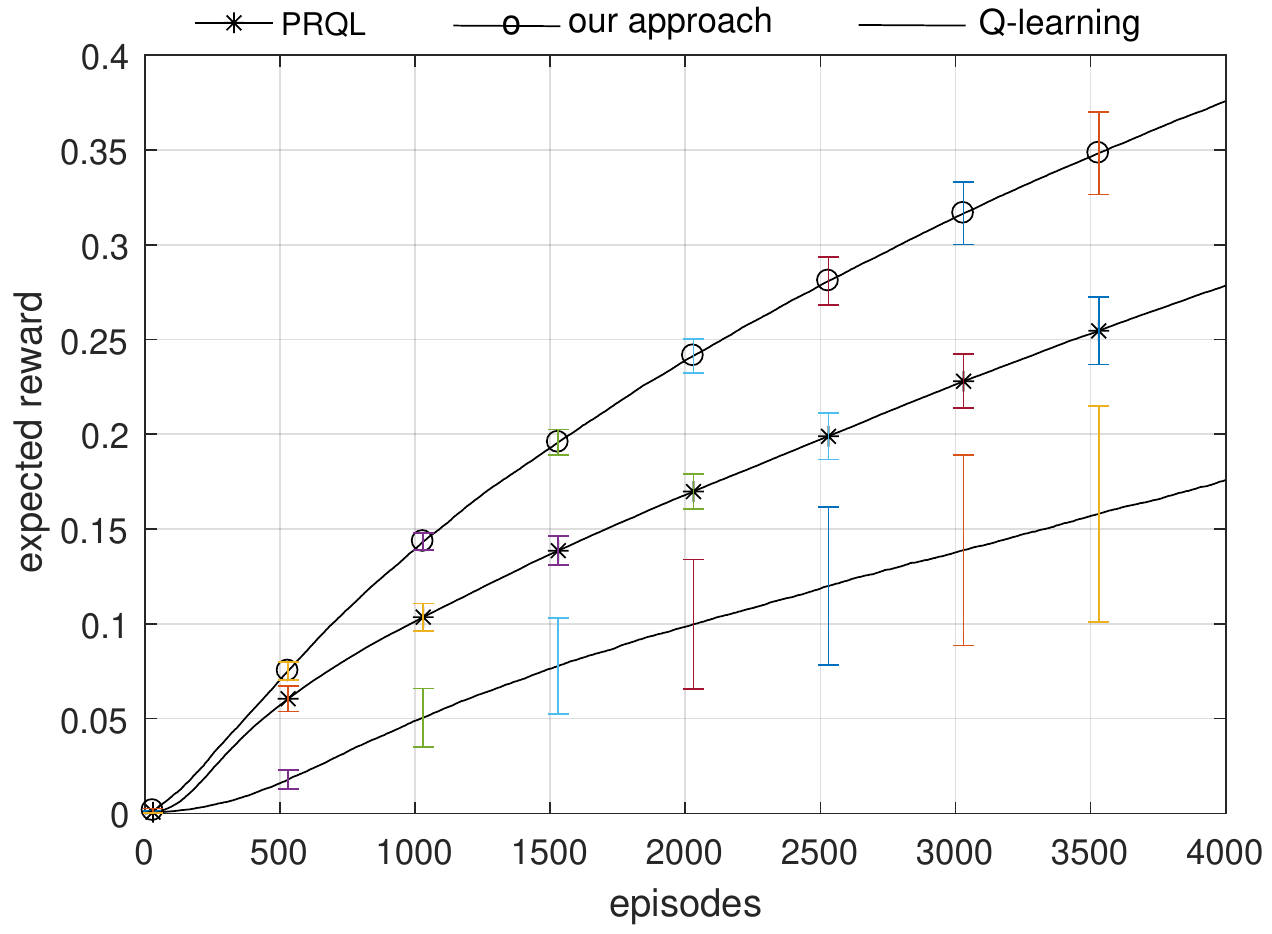}
   \setlength{\abovecaptionskip}{-0.3cm}
   \setlength{\belowcaptionskip}{-0.3cm}
   \caption{Expected reward of  PRQL, our approach and Q-learning}
\end{figure}

In Figure 5, the curve of our approach rises fastest, which demonstrates that our agent reaches the goal more times using the same time. So we obtain more rewards during learning and these rewards ``back up" to other states.
The expected reward converges slowly,
for only reaching the goal state generates a reward. Therefore, we have not shown the convergent part, so the curves seem polynomial.

To demonstrate robustness of our algorithm, we randomly select goals of target tasks, guaranteed that these goals are in the same room as one of source tasks in Figure 2.
So there is a similar source task to transfer from.
In next experiment, we discuss the situation where no source knowledge is useful. We choose 9 different goals for this experiment, which are shown in Figure 6 with numbers.
\begin{figure}[htb]
   \centering
   \includegraphics[width =0.47\textwidth]{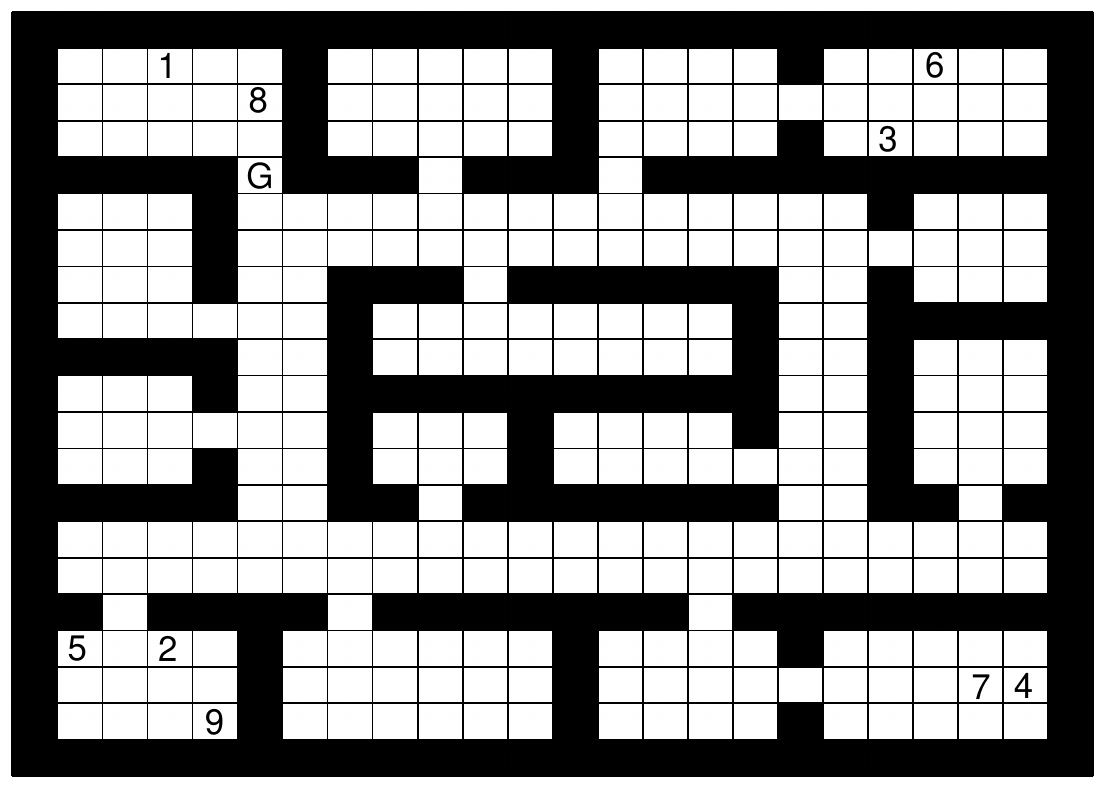}
   \setlength{\abovecaptionskip}{-0.3cm}
   \setlength{\belowcaptionskip}{-0.5cm}
   \caption{Randomly selected target tasks}
\end{figure}
\begin{figure*}[htb]
   \centering
   \includegraphics[width =0.99\textwidth]{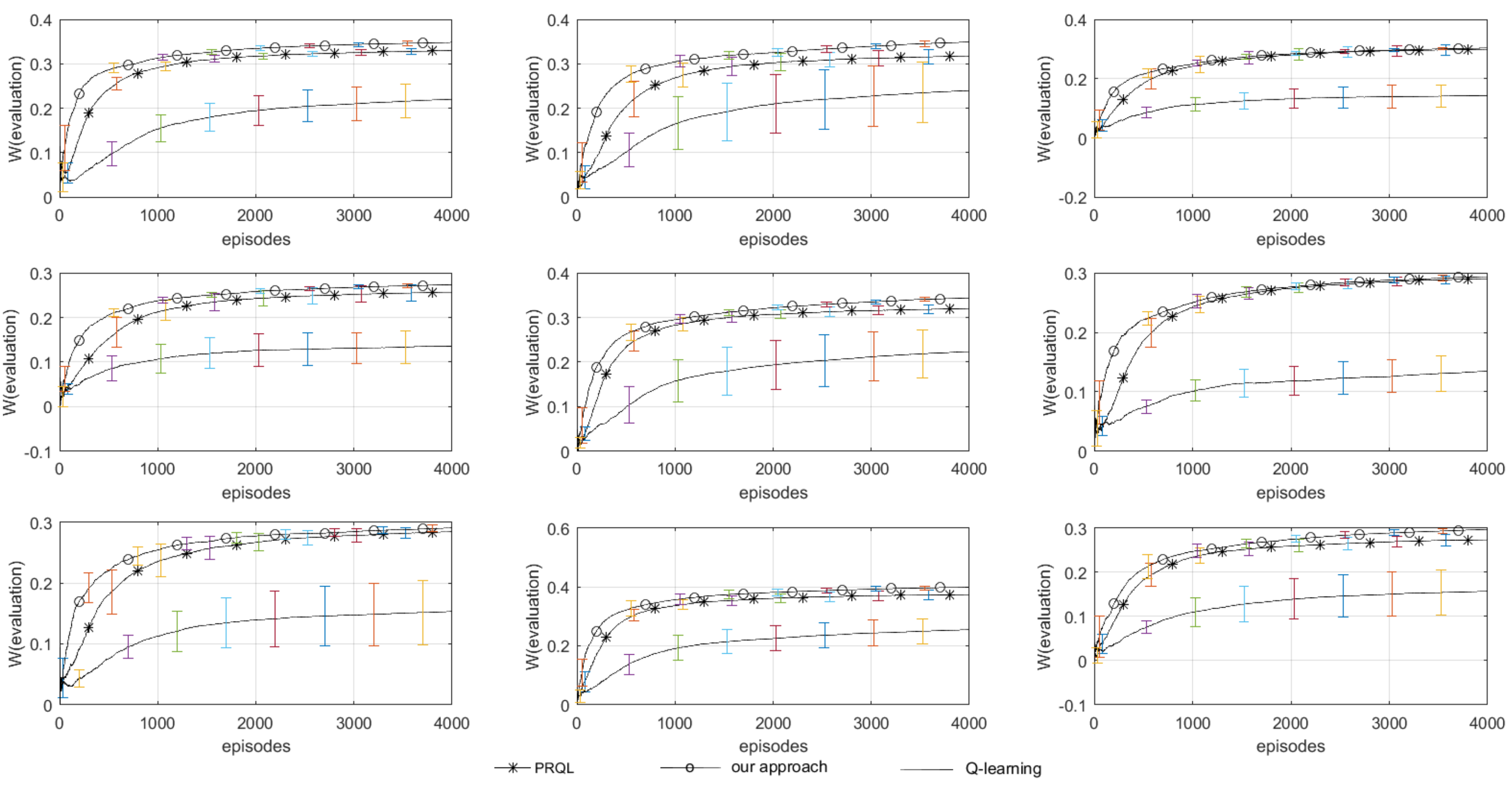}
   \setlength{\abovecaptionskip}{-0.1cm}
   \setlength{\belowcaptionskip}{-0.3cm}
   \caption{Average reward of our algorithm, PRQL and Q-learning to solve randomly selected target tasks}
\end{figure*}

 Figure 7 shows average evaluation reward of our algorithm, PRQL and Q-learning when solving tasks denoted in Figure 6.
The reward of our algorithm has a faster increment than PRQL in all the cases. Moreover, convergence value of reward in our algorithm is sometimes a little larger.
PRQL converges to the greedy policy and does not explore any more, soon after the reward of greedy policy exceeds all source policies.
Thus, it may converge to a suboptimal policy in the end. However, our approach keeps doing exploration, so every state-action pair is visited infinitely often. As a result, our algorithm surely converges to the optimal policy.

To indicate that our method can be applied to a circumstance where  there is no similar task in source task library, we conduct the above experiment again to solve target task $\Omega'$ in Figure 2 using the same source task library.
As we can see, the goal of $\Omega'$ is in a totally different room compared to the goals of source tasks.
So all the source policies in the library are useless for $\Omega'$.
The performance comparison of PRQL, our approach and Q-learning is shown in Figure 8.

\vspace{-8pt}
\begin{figure}[htb]
   \centering
   \includegraphics[width =0.47\textwidth]{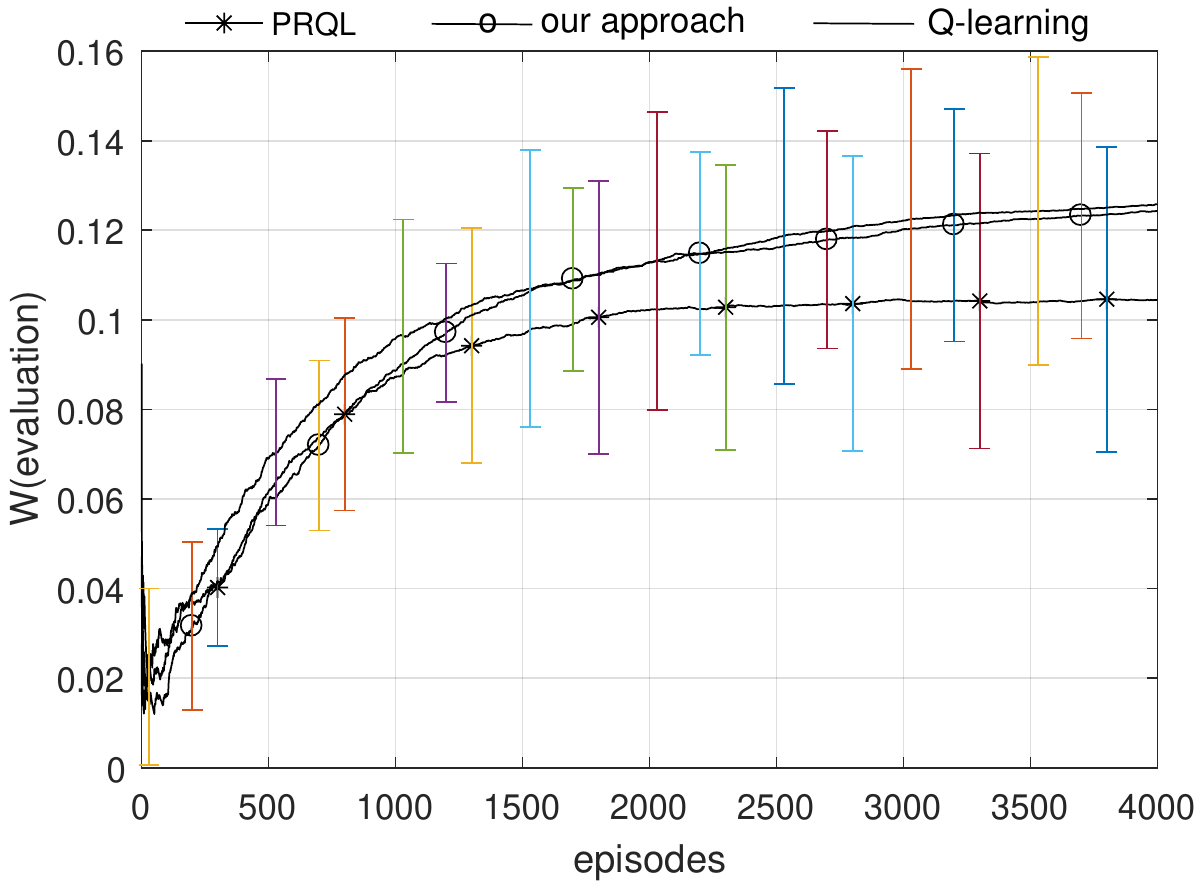}
   \caption{Performance comparison of PRQL, our algorithm
and traditional Q-learning on target task $\Omega'$}
\end{figure}

\vspace{3mm}
 As shown in Figure 8, during the first 1000 episodes, the three curves have a similar growth trend, for all the algorithms do exploration in their own way at the beginning. Afterwards, the curve of PRQL starts to flatten, because it has converged to the greedy policy and no random actions is taken. In contrast,
 our approach has almost the same performance as Q-learning and their curves keep rising, since these two methods go on doing exploration with $\epsilon\text{-}greedy$ strategy.
$\epsilon\text{-}greedy$ strategy of our approach plays a paramount role during learning.

 We set position marked with $G$ in Figure 6 as the goal of task $\Omega_{s}$ to present a case when PRQL does not converge to the greedy policy.
 We show the probability of executing each policy by  PRQL  to solve  $\Omega_{s}$ in Figure 9.
 The goal of $\Omega_s$ is just on the way to the goal of $\Omega_1$, so these two tasks are especially similar.
\begin{figure}[htbp]
   \centering
   \includegraphics[width =0.47\textwidth]{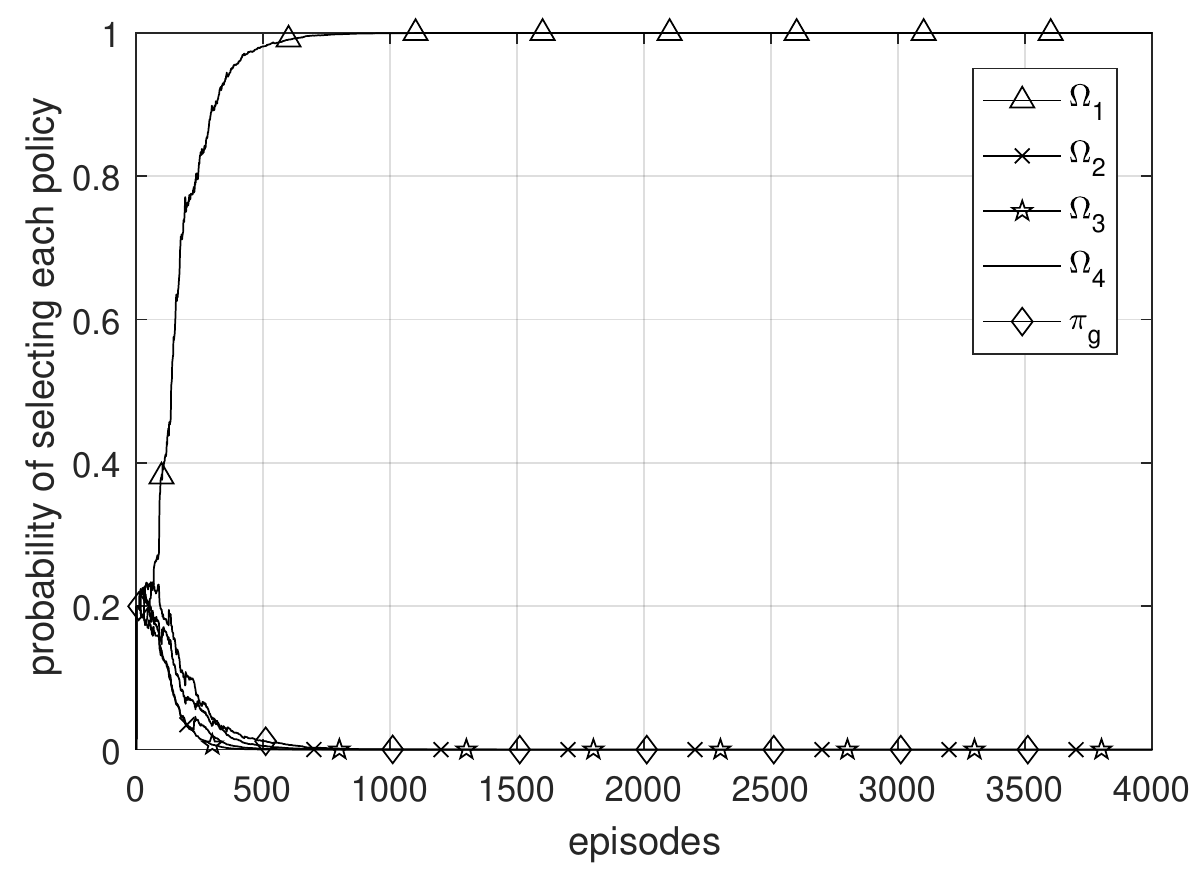}
   \setlength{\abovecaptionskip}{-0.5cm}
   \setlength{\belowcaptionskip}{-0.3cm}
   \caption{Probability of selecting each policy by PRQL to solve $\Omega_s$}
\end{figure}

PRQL ends up reusing the most similar source task $\Omega_1$ to solve $\Omega_s$ rather than the greedy policy in 2 of the 10 times. When there is a source task especially similar to the target task, the reward of reusing the most similar policy may exceed the reward of the greedy policy, so PRQL converges to the most relevant source task.
In contrast, our algorithm controls the exploration rate by tuning $p_t$.
We choose $\epsilon\text{-}greedy$ strategy with a probability of $1-p_t$.
 As $p_t$ decreases over time, our algorithm will invariably converges to $\epsilon\text{-}greedy$ policy no matter how similar the target task is to source tasks.

\section{Summary and Future Directions}
This work focuses on transfer learning in RL.
In this paper, we develop an optimal online method of selecting source policies.
Our method formulates online source policy selection as an MAB problem.
In contrast to previous works, this work provides firm theoretical ground to achieve the optimal source policy selection process.
In addition, we augment Q-learning with policy reuse and maintain the same theoretical guarantee of convergence as tradional Q-learning.
Furthermore, we present empirical validation that our algorithm outperforms the state-of-the-art transfer learning method and promotes transfer successfully in practice.

These promising results suggest several interesting directions for future research.
One of them is to combine inter-mapping between source tasks and target tasks with policy reuse. So we are able to deal with more general circumstance of different state and action spaces.
Second, we intend to formulate source task selection problem in an MDP setting and select source tasks based on the current state, thus the moment of transfer is determined automatically. Finally, it's of importance to extend the proposed algorithm to deep RL and test it in benchmark problems.

\bibliographystyle{aaai} \bibliography{1,2,3,4,5,6,7,8,9,10,11,12,13,14,15,16,17,18,19,20,21,22,23,24,25,26,27,28,29,30,31,32,37,38,39,40,41,42,43,44,45,48,49,50}

\end{document}